%% file: main.tex
\pdfoutput=1
\documentclass[11pt,a4paper]{article}
\usepackage[T1]{fontenc}
\usepackage[utf8]{inputenc}
\usepackage{lmodern}
\usepackage[a4paper,margin=1in]{geometry}
\usepackage{microtype,parskip}
\usepackage{amsmath,amssymb,amsthm}
\usepackage{graphicx,booktabs,tabularx,array}
\usepackage[font=small,labelfont=bf]{caption}
\usepackage[round,authoryear]{natbib}
\usepackage{xcolor}
\usepackage{tikz}
\usetikzlibrary{arrows.meta,positioning}
\usepackage[colorlinks=true,linkcolor=black,citecolor=black,urlcolor=blue!45!black]{hyperref}
\hypersetup{pdftitle={The Last Human Gate: Forward Deployed Engineering for Governance Automation},pdfauthor={Jeremy Canale},pdfsubject={Task substitution, governance contracts, residual human work, and DGF-Bench}}
\newcolumntype{L}{>{\raggedright\arraybackslash}X}
\newcolumntype{P}[1]{>{\raggedright\arraybackslash}p{#1}}
\newtheorem{proposition}{Proposition}
\newtheorem{definition}{Definition}

\title{\textbf{The Last Human Gate}\\[0.4em]
\large Forward Deployed Engineering for Governance Automation}
\author{Jeremy Canale\\\normalsize\texttt{contact@jeremycanale.com}\\
\normalsize\url{https://www.jeremycanale.com}}
\date{September 2026}
\begin{document}
\maketitle
\begin{abstract}
Enterprise governance requires decisions, evidence, and accountable authority; it does not
require every review task to retain its current human implementation. We develop a task-substitution
framework for Digital Governance Frameworks (DGF), treating each gate as an executable contract.
Substitution requires sufficient accessible information, valid decision and authority checks, and
a reduction in total human work after exceptions, verification, correction, and maintenance are
counted. We derive a residual-work threshold and show why automating most cases can still
increase labor. Forward deployed engineering connects these conditions to an architecture for
agents, rule engines, evidence services, and escalation. DGF-Bench supplies controlled evidence
from 300 synthetic projects and 899 evaluable model--project runs. Gemini 3.8 Flash, GPT-5.6 Luna,
and DeepSeek v4.1 Flash achieve strict gate success of 94.98\%, 83.29\%, and 74.18\%; complete-route
success is 76.92\%, 42.33\%, and 24.67\%. A deterministic control passes all 1,700 gates given
the supplied rules and structured facts, locating the comparison in execution of a supplied
decision kernel. Evidence audits and 135 repeated runs distinguish
correct decisions from reliable execution. A document counterexample establishes an
information-sufficiency obstruction. These results support the technical feasibility of
replacing human execution of specified governance-review tasks with agents and software.
The framework specifies a workforce test based on the complete human effort required at fixed
output and quality; the present measurements concern review performance. Sources, dossiers,
traces, and analyses are public.
\end{abstract}
\noindent\textbf{Keywords:} task substitution; forward deployed engineering; AI agents; enterprise governance; human work; DGF-Bench.

\input{sections/01_argument}
\input{sections/01_related_work}
\input{sections/02_contract}
\input{sections/05_labor}
\input{sections/04_deployment}
\input{sections/02_experiment}
\input{sections/06_validation}
\input{sections/07_conclusion}
\clearpage
\input{sections/08_reproduction}
\clearpage
{\small\input{sections/references}}
\end{document}

%% file: sections/01_argument.tex
\section{Introduction: replacing execution, preserving the function}

Before an enterprise buys software, connects systems, or releases an application, people review
its architecture, security, contracts, cost, and operational readiness. The required outputs are
decisions and their supporting work: identify a missing agreement, reject an unsafe network
design, require a recovery test, or authorize a conditional release. These functions persist
even if software performs work previously assigned to analysts, architects, and committee
support staff. A professional title is an organizational allocation of tasks, not a technical
specification of how those tasks must be executed.

This paper argues that a Digital Governance Framework (DGF), understood as an enterprise's
connected review gates, is a tractable candidate for early substitution of human review tasks
by agentic systems. The claim concerns execution: an agent that produces a draft for someone
to redo has supplied assistance; a system that completes an accepted review has performed that
task. A workflow can contain substituted tasks and remaining human decisions at the same time.
The relevant system can combine language models, deterministic software, and escalation.
Replacing every rule check with a language-model call is neither necessary nor desirable.

Figure~\ref{fig:dgf-routes} makes this organization concrete: the same project dossier moves
through several specialist reviews, each producing a decision and its supporting record.
Buying a solution, connecting systems, and building an application require different review
sequences. General consolidates the preceding reviews into the route's governance record.

\input{figures/dgf_routes}

The DGF organizes work into dossiers, policies, named authorities, bounded decisions, and
recorded handoffs. These interfaces give engineering a concrete starting point. Operational
substitution requires accessible facts, manageable exception work, faithful evidence, and
authorized commitments. Each requirement identifies an implementation task and an acceptance
criterion. The current human allocation can change as a software implementation meets those
criteria. The proposed priority of DGF relative to other business functions is a comparative
adoption hypothesis to be evaluated across deployment settings.

We connect three contributions. First, a gate-contract formulation identifies information,
validity, and authority requirements for delegating review execution. A source counterexample
shows when no reviewer can satisfy a fixed decision contract from the permitted information.
Second, a complete labor account gives a substitution threshold: reducing routine execution
only reduces required human work if exception handling, verification, correction, and support
remain below a computable budget. Third, DGF-Bench provides a public, controlled measurement
of specified review tasks and their failure modes, including a deterministic control and
repeated trajectories. The purpose of linking them is to make the route from an agent score
to a workforce claim inspectable; none of the three substitutes for the others.

\subsection{Research questions and units of analysis}

We organize the argument around three questions with different observational units.
\emph{First, can software execute a specified gate contract?} Its unit is a review occurrence,
with fixed inputs, rules, and acceptance conditions. DGF-Bench measures this question in a
synthetic environment. \emph{Second, can that execution remain dependable across a project?}
Its unit is the complete route, including the evidence and conditions passed between gates.
Route success and repeated trajectories address parts of this question. \emph{Third, does
deployment reduce the human work required to deliver the governed output?} Its unit is an
operating population over a stated period. Answering it requires a labor account, including
work added outside the nominal review team. This third quantity is not present in an API trace.

The units cannot be exchanged without assumptions. Five thousand scored gates are not five
thousand independent companies. A successful run on a generated dossier is not a measured
number of hours saved. A refusal may be a successful review even though the underlying project
does not proceed. Defining these distinctions makes a positive replacement claim more precise:
software can execute an accepted task while humans remain responsible for setting its scope,
maintaining its infrastructure, or resolving a subset of exceptions.

\begin{table}[!htb]
\centering\small
\caption{Claim structure and evidence required. A result at one level does not supply the
missing observation at another.}
\label{tab:claim-structure}
\begin{tabularx}{\linewidth}{@{}P{2.8cm}LL@{}}
\toprule
Claim & Observable test & Evidence in this paper\\
\midrule
Contract execution & Decision, findings, actions, evidence, and mandate meet fixed criteria & Original runs and deterministic control\\
Workflow reliability & Every required gate succeeds; repeated trajectories remain successful & Complete-route scores and 135 additional runs\\
Net task substitution & Accepted output requires fewer total human hours & Accounting conditions and illustrative calculations\\
Ecosystem workforce reduction & Comparable governed output and quality with a lower complete FTE account & Dated hypothesis and a measurement protocol\\
\bottomrule
\end{tabularx}
\end{table}

The central thesis is consequently about the implementation of a function. Organizations may
still require architecture assurance, a security judgment, or evidence of operational readiness.
That continuing requirement does not establish that every dossier needs the same manual
execution. At the same time, describing a function as a contract does not make its evidence
available or resolve policy disagreement. The proposed engineering work is to turn the subset
with adequate information and delegated authority into executable review services, and then
measure the cost of the boundary that remains.

The paper proceeds from the contract to the labor account, then to its FDE implementation and
experimental evidence. We conclude with a deployment test for the workforce hypothesis.
This manuscript synthesizes the longer \emph{The Last Human Gate} and replaces the previous
empirical companion in the repository. The versions share their experimental record;
the expanded exposition reports no additional model calls or enterprise observations.

%% file: figures/dgf_routes.tex
\begin{figure}[!htb]
\centering
\begin{tikzpicture}[
  x=2.5cm,y=1cm,
  gate/.style={draw=black!55,rounded corners=2pt,minimum height=0.64cm,
    text width=2.08cm,align=center,inner sep=2pt,font=\small},
  consolidate/.style={gate,fill=black!8,draw=black!75},
  flow/.style={-{Stealth[length=1.5mm]},draw=black!65,semithick},
  route/.style={anchor=west,font=\small}
]
\node[route] at (-0.46,0.62) {\textbf{BUY} \quad Purchase a solution};
\node[gate,fill=blue!5] (b1) at (0,0) {Procurement};
\node[gate,fill=blue!5] (b2) at (1,0) {Legal};
\node[gate,fill=blue!5] (b3) at (2,0) {Compliance};
\node[gate,fill=blue!5] (b4) at (3,0) {Security};
\node[gate,fill=blue!5] (b5) at (4,0) {IT};
\node[consolidate] (b6) at (5,0) {General};
\foreach \a/\b in {b1/b2,b2/b3,b3/b4,b4/b5,b5/b6}
  \draw[flow] (\a) -- (\b);

\node[route] at (-0.46,-0.73) {\textbf{INTEGRATE} \quad Connect existing systems};
\node[gate,fill=teal!6] (i1) at (0,-1.35) {IT};
\node[gate,fill=teal!6] (i2) at (1,-1.35) {Architecture};
\node[gate,fill=teal!6] (i3) at (2,-1.35) {Security};
\node[gate,fill=teal!6] (i4) at (3,-1.35) {Legal};
\node[gate,fill=teal!6] (i5) at (4,-1.35) {Compliance};
\node[consolidate] (i6) at (5,-1.35) {General};
\foreach \a/\b in {i1/i2,i2/i3,i3/i4,i4/i5,i5/i6}
  \draw[flow] (\a) -- (\b);

\node[route] at (-0.46,-2.08) {\textbf{BUILD} \quad Develop an application};
\node[gate,fill=orange!7] (d1) at (0,-2.7) {IT};
\node[gate,fill=orange!7] (d2) at (1.25,-2.7) {Architecture};
\node[gate,fill=orange!7] (d3) at (2.5,-2.7) {Security};
\node[gate,fill=orange!7] (d4) at (3.75,-2.7) {Tech\\Readiness};
\node[consolidate] (d5) at (5,-2.7) {General};
\foreach \a/\b in {d1/d2,d2/d3,d3/d4,d4/d5}
  \draw[flow] (\a) -- (\b);

\node[font=\small,align=center] at (2.5,-3.48)
  {\textbf{At each gate:} inspect the dossier; record findings, evidence, actions, and a decision.};
\end{tikzpicture}
\caption{Three DGF workflows evaluated in DGF-Bench. A project follows the route matching its
need. Arrows show review order and information handoffs; General consolidates preceding reviews.
The benchmark runs all scheduled gates, including after a refusal. These are example governance
routes, not automatic approvals or a universal enterprise standard.}
\label{fig:dgf-routes}
\end{figure}

%% file: sections/01_related_work.tex
\section{Related work and the evaluation boundary}
\label{sec:related}

\subsection{From task automation to the organization of work}

Task-based accounts separate the displacement of existing work from changes in demand and
the creation of new tasks \citep{acemoglu2019}. That distinction matters for governance:
an automated review can remove execution hours while the organization expands its project
portfolio or assigns the released capacity elsewhere. We study the labor required for a
comparable governed workload, before interpreting actual employment. This fixes an accounting
question rather than assuming that every efficiency gain becomes a redundancy.

Automation can also leave difficult monitoring and intervention work behind
\citep{bainbridge1983}. A routine-case success rate does not say whether the remaining cases
are more expensive than average. The selection identity in Section~\ref{sec:substitution}
makes that concern explicit through the share of baseline labor located in exceptions.
Sequential task assignment and complementary quality requirements motivate treating handoffs
and bottlenecks as part of the process \citep{demirer2026,gans2026}. Our contribution applies
these questions to a defined governance task and reports the quantities needed to move from
an execution claim to a net-work claim. The equations are accounting relationships and
conditional thresholds, not an alternative macroeconomic model.

\subsection{Governance gates and executable process obligations}

Stage-gate organization predates language-model agents \citep{cooper1990}. It supplies a
recognizable decision boundary: a proposal arrives with evidence, reviewers assess it, and
a decision controls the next step. Business-process compliance research studies how obligations
can be represented and monitored during execution \citep{ly2015}. Agentic business-process
management places agents within a broader process-engineering agenda \citep{calvanese2026}.
These traditions provide the process foundations for the task-substitution question studied here.

The question here is what must hold when execution of such a boundary is transferred from
people to software. A review can include both a deterministic policy kernel and interpretation
of evidence whose form is less regular. The kernel may be compiled into ordinary software;
an agent may recover candidate facts, investigate inconsistencies, or assemble a justified
record. The experiment supplies formalized policies and structured facts. Its rules control
identifies the executable decision kernel; the agent runs measure alternative implementations
of the review contract under that same information condition. The contribution of agents to
recovering less structured facts is a separate comparison specified in Section~\ref{sec:implications}.

\subsection{Enterprise-agent benchmarks}

WorkArena evaluates browser-based tasks on enterprise software, using ServiceNow as its
environment \citep{drouin2024}. ITBench concerns operational IT scenarios in site reliability,
compliance and security operations, and financial operations \citep{itbench2025}.
$\tau$-bench evaluates policy-guided interactions among tools, agents, and simulated users,
and explicitly studies consistency across repeated trials \citep{taubench2024}.
These benchmarks establish important neighboring evaluation problems. Table~\ref{tab:related}
locates DGF-Bench by task and acceptance boundary, rather than comparing incompatible scores.

\begin{table}[!htb]
\centering\small
\caption{Neighboring evaluation tasks. The descriptions concern the cited releases; no
cross-benchmark performance ordering is inferred.}
\label{tab:related}
\begin{tabularx}{\linewidth}{@{}P{2.4cm}LL@{}}
\toprule
Benchmark & Central task & Relation to this study\\
\midrule
WorkArena & Execute knowledge-work tasks through enterprise web interfaces & Interface operation can support evidence collection, but is not the gate-review endpoint here\\
ITBench & Address operational IT automation scenarios & Operational execution and remediation differ from reviewing whether a project meets a governance contract\\
$\tau$-bench & Use tools while following domain policies in a user interaction & Motivates repeated reliability and policy compliance; DGF-Bench follows specialist reviews through a dossier route\\
DGF-Bench & Produce a disposition with findings, actions, observed evidence, and authorization & Measures complete review records and whole-route success under explicit supplied policies and facts\\
\bottomrule
\end{tabularx}
\end{table}

The distinction is consequential. A governance agent that asks for a recovery test has
completed one kind of work; an operational agent that repairs the service and demonstrates
recovery has completed another. DGF-Bench scores the former contract and does not take credit
for the latter. Likewise, the current review tools expose structured facts rather than
requiring a browser agent to find every fact in a production interface. These choices define
the measured task; they prevent interpreting a higher percentage as superiority over an
agent evaluated on a broader or less prepared task.

\subsection{Evidence quality and useful agent comparisons}

Work on attribution and citation quality separates a correct answer from a supported one
\citep{gao2023,rashkin2023}. The separation is central to an auditable review: the final
disposition and the record supporting it can fail independently. Our frozen strict endpoint
requires observed sources and conforming excerpts. A separate structural audit tests how
sensitive the conclusions are to field order and numeric representation. It does not convert
lexical or structural matching into an expert judgment about every premise.

Agent evaluation also benefits from simple baselines and explicit resource accounting
\citep{kapoor2024}. We therefore report the deterministic control, failed attempts, inference
costs, and repeated trajectories together with model scores. In this study the strongest
claim is that specified review tasks can execute in software and that their reliability can
be inspected at several levels. Establishing the incremental value of an agent for less
structured evidence requires the matched-information extension in Section~\ref{sec:implications}.
The labor account then supplies a further test: whether the resulting system transfers work
without recreating equivalent hours in verification, exceptions, or maintenance.

%% file: sections/02_contract.tex
\section{A gate as a substitutable contract}
\label{sec:contract}

A DGF is an organization's composition of review checkpoints. It is not a universal sequence:
enterprises can merge, split, or reorder gates while preserving their obligations and information
dependencies. Typical tasks include supplier due diligence, contractual checks, technical
architecture review, security assessment, readiness verification, and final consolidation.
We study three example routes, Buy, Integrate, and Build; the argument applies to the review
contract, not to these particular names.

\input{sections/02_gate_guide}

\subsection{The review contract}

\begin{definition}[Review contract]
A gate contract specifies permitted evidence and tools, a versioned policy, an authorization
mandate, and the conditions for accepting an output. The output contains a disposition,
findings, required actions, supporting evidence, and an authorization record. Its accepted
form and substantive requirements are fixed before evaluating a proposed replacement.
\end{definition}

In DGF-Bench the dispositions are \texttt{GO}, \texttt{GO\_WITH\_RESERVATIONS}, \texttt{REWORK},
\texttt{SUSPENSION}, and \texttt{NO\_GO}. A refusal can be a correct completed review. Conversely,
an approval is unsuccessful if it omits a mandatory defect or exceeds the reviewer's mandate.
Completing the review is distinct from implementing the remediation it requests. The same
boundary must apply to human and software work when estimating substitution.

\subsection{What is held fixed when execution changes}

A replacement is evaluated on a declared population: for example, supplier reviews below a
specified purchasing threshold, or architecture reviews of services using a defined platform.
The population includes difficult cases, missing evidence, and legitimate refusals. Excluding
these after observing failures would change the task being automated. A narrower deployment
can still be useful, but its scope and excluded work must be reported explicitly.

The contract also identifies a review's start and end. Reading a complete dossier and issuing
an opinion is a different work package from obtaining that dossier, negotiating a clause, or
verifying a corrected deployment. A study of the first package can establish execution of that
package. It cannot silently count the other packages as replaced. Conversely, the continued
need for remediation does not mean that issuing the review remains a human task. These
distinctions permit substitution to be measured at useful intermediate boundaries.

Versioning makes the comparison inspectable. A decision record should identify the policy,
source state, and mandate under which it was produced. A later policy change may justify a
different decision on the same project; it does not retroactively make the earlier system
inconsistent. If documents change during a review, the contract must specify whether the
agent freezes an evidence version, refreshes affected checks, or reopens the gate. Such
requirements apply to the whole implementation, including its tools and controller.

\subsection{Three conditions for a replacement}

\textbf{Information.} The permitted observations must contain what the policy needs, or the
contract must allow a request for information or abstention. A model cannot recover a missing
fact merely by generating a plausible answer. Source accessibility matters as much as physical
file existence; the agent may lack permission to read a relevant record.

\textbf{Validity.} The system must produce outputs satisfying the review's quality and service
requirements on its declared operating population. These include evidence support and the
handling of uncertainty, not just a disposition matching a label. A machine-readable schema
can reject malformed records but cannot by itself prove the substantive decision correct.

\textbf{Authority.} A valid mandate must permit the system's action. An agent's proposal and
the mechanism that makes a commitment effective are separate. Deterministic enforcement can
block forbidden commitments even when an agent requests them; a final authorized output does
not establish that the model respected authority unaided. Delegating bounded execution does
not eliminate the organization's accountability or resolve deployment-specific legal requirements.

These conditions describe functional replacement. Net substitution of human work additionally
requires a reduction in all human hours used to deliver that function, as derived next. Removing
a routine reviewer while adding equivalent effort to project teams or support providers moves
the work rather than reducing it.

Quality and coverage must be reported together. An implementation that escalates every case
may avoid unauthorized approvals while performing almost none of the review itself. At the
other extreme, an implementation that approves every case completes many transactions but
does not preserve the governance function. An abstention is valid only when the contract
permits it for the observed information or uncertainty; the resulting human investigation
still belongs in the labor account. Correct refusals, permitted abstentions, unresolved
reviews, and technical failures therefore require distinct outcome labels.

For an engineering comparison, acceptance is assessed on the final record and the path by
which it became effective. If an enforcement tool rejects a forbidden request, the system
has preserved that authority boundary. The rejected request remains evidence about the
model's behavior. Similarly, a corrected evidence excerpt can make the final record usable
while revealing a need for extra verification. Reporting only the final label would conceal
these different mechanisms and their operating costs.

\subsection{An information obstruction}

Let $I(x)$ denote all observations permitted in an evaluation condition for underlying case $x$,
and $V(x)$ the set of outputs accepted by its fixed contract. Policy and mandate are held fixed.

\begin{proposition}[Indistinguishable cases]
If two admissible cases satisfy $I(x_1)=I(x_2)$ but
$V(x_1)\cap V(x_2)=\varnothing$, no system using only $I$ can guarantee a valid output on both.
\end{proposition}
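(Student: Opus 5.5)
The argument is a short formalization: a system that uses only $I$ behaves identically on $x_1$ and $x_2$. I would first make "a system using only $I$" precise. A deterministic system is a map $f$ from permitted observations to outputs, so its output on case $x$ is $f(I(x))$. A randomized or interactive system (an agent with tool calls, sampling, or retries) is a map from $I(x)$ to a probability distribution over outputs. Here $I(x)$ is taken to include everything the tools may return under the evaluation condition, and any internal randomness is independent of the case. Policy and mandate are held fixed, so they enter $f$ as constants and not as case-dependent inputs. This modelling step carries the real content of the statement. It is also where I expect the only subtlety: one must insist that tool responses are part of $I$, since otherwise an agent could "observe" more than $I$ through its environment.

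\textbf{Deterministic case.} Because $I(x_1)=I(x_2)$, we have $f(I(x_1))=f(I(x_2))=:y$. A valid output on both cases would require $y\in V(x_1)$ and $y\in V(x_2)$, that is, $y\in V(x_1)\cap V(x_2)=\varnothing$, which is impossible. So on at least one of the two cases the output is not accepted.

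\textbf{Randomized case.} The output distributions on the two cases coincide; call the common distribution $\mu$. Since $V(x_1)$ and $V(x_2)$ are disjoint,
\[
\mu(V(x_1))+\mu(V(x_2))\le 1 .
\]
Hence the probability of a valid output is at most $1/2$ on at least one of the cases. No system can therefore guarantee validity on both, either surely or with probability exceeding $1/2$ on each. This also shows that repeated sampling cannot remove the obstruction.

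I would close with a remark tying the result to the information condition of Section~\ref{sec:contract}. The only escapes are to enlarge $I$, for example by granting access to the missing record, or to change the contract so that some shared output lies in both $V(x_1)$ and $V(x_2)$, such as a permitted request for information or an abstention. Each of these alters the hypotheses rather than contradicting the proposition.
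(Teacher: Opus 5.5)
Your proof is correct and follows the paper's argument: equal observations force equal output distributions, with the deterministic case as a point mass, and no single distribution can put probability one on two disjoint accepted sets. Your bound $\mu(V(x_1))+\mu(V(x_2))\le 1$ slightly sharpens that same step, and your requirement that tool responses belong to $I$ states an assumption the paper leaves implicit.
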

\begin{proof}
Equal observations imply equal output distributions, including for a deterministic system.
Guaranteed validity on both cases would require one distribution to assign probability one
to both disjoint accepted sets, which is impossible.
\end{proof}

This elementary observation is an operational test for an automation boundary, not a new general
impossibility theorem. Our offline counterexample makes it concrete: two variants have identical
extracted text in all 26 Word documents, but changing the underlying supplier due-diligence
status, recorded in CSV evidence, changes the fixed reference decision from GO to REWORK.
A condition restricted to those Word
texts therefore cannot reproduce both reference decisions. Allowing the CSV, or changing the
contract to accept a justified request for missing information, changes the task. For the fixed
reference test, the accepted disposition sets are disjoint. This obstruction applies to a human
reader as well as a model; it identifies a deficient information interface.

\subsection{Why the DGF is a candidate}

Table~\ref{tab:candidate} turns the claim that DGF is suitable for automation into properties
that can be checked in an organization. These are engineering advantages when present, not
guarantees that every enterprise already has clean rules and accessible evidence.

\begin{table}[!htb]
\centering\small
\caption{Properties that make a governance workflow a candidate for task substitution.}
\label{tab:candidate}
\begin{tabularx}{\linewidth}{@{}P{3.1cm}LL@{}}
\toprule
Property & Potential advantage & Condition that can defeat it\\
\midrule
Recurring dossiers and outputs & Reusable extraction and decision interfaces & Material facts remain tacit or inaccessible\\
Explicit policies and mandates & Executable checks and bounded delegation & Conflicting rules or discretionary commitments\\
Recorded gate handoffs & Shared evidence and traceable dependencies & Work is repeated or lost between teams\\
Repeated review populations & Integration cost spread across cases & Low volume or frequent policy change\\
Observable acceptance criteria & Regression tests and monitored service & Evaluation rewards format instead of valid work\\
\bottomrule
\end{tabularx}
\end{table}

These properties make the workflow an engineering candidate because its obligations can be
named and inspected. They do not require a single agent to perform every operation. A rule
engine can implement an explicit threshold; an agent can investigate a narrative discrepancy;
an authorization service can enforce the resulting scope. The relevant comparison concerns
whether the combined implementation executes the accepted work at the required quality and
resource cost. The benchmark's deterministic control is consequently informative about the
formalized decision kernel, even though it cannot establish the benefit of language models
for extracting facts or resolving ambiguous policies.

At route level, replacement also requires compatible handoffs. An upstream decision may be
valid only subject to a condition that a later gate must verify. Dropping that condition changes
the governed output. Merging two software services is acceptable if both obligations remain
enforced; skipping a required check is not. Because failures and missing evidence can propagate,
whole-route evaluation is necessary alongside individual gate scores. Multiplying average
gate success rates would require assumptions about dependence that this benchmark does not
establish.

%% file: sections/02_gate_guide.tex
\subsection{The work behind the checkpoints}

A project owner assembles a dossier describing the proposed change: its need, users, budget,
technical design, supplier commitments, and readiness evidence. A specialist gate examines the
parts relevant to its policy and returns a recorded opinion. The same dossier can therefore
receive different findings from Architecture, Security, and Legal without those reviews being
duplicates. General combines their effective opinions with its own governance checks.
Table~\ref{tab:gate-guide} explains the eight families represented in the experiment.

\begin{table}[!htb]
\centering\small
\caption{What each benchmark gate reviews. Questions summarize the supplied policies; listed
records and checks are examples, not exhaustive definitions of the professions.}
\label{tab:gate-guide}
\begin{tabularx}{\linewidth}{@{}P{2.45cm}LL@{}}
\toprule
Gate & Review question & Examples of material examined\\
\midrule
Procurement & Does the selected supplier satisfy purchasing requirements?
& Offers, mandatory criteria, due diligence, sanctions, and three-year cost\\
Legal & Are the contractual prerequisites and signing authority present?
& Data-processing agreement, liability, exit assistance, and signing authority\\
Compliance & Does the proposal satisfy the applicable compliance checks?
& Required impact assessment, residency, regulatory mapping, and audit trail\\
Security & Does the design provide the protections required for its exposure and data?
& Identity controls, private endpoints, monitoring, and vulnerability findings\\
IT & Does the proposal fit the technology and operating environment?
& Catalog status, existing capability, capacity, support ownership, and change records\\
Architecture & Is the proposed system consistent with the declared design constraints?
& Network overlap, interfaces, data ownership, latency, and reversibility\\
Tech Readiness & Is the evidence of operational readiness sufficient?
& Load, restore and recovery tests, runbook, rollback, and handover\\
General & What governance disposition follows from the project and specialist reviews?
& Effective upstream opinions, budget, strategic alignment, and change plan\\
\bottomrule
\end{tabularx}
\end{table}

The five dispositions express different completed review outcomes. \texttt{GO} permits
proceeding within the reviewed scope. \texttt{GO\_WITH\_RESERVATIONS} retains conditions
alongside that permission. \texttt{REWORK} requests changes to the submitted proposal;
\texttt{SUSPENSION} waits for a prerequisite; \texttt{NO\_GO} rejects proceeding as submitted.
The policy and mandate determine which outcome is admissible. The experimental schedule
collects every review in the route, including after a blocking opinion, to evaluate the
remaining specialist work and final consolidation.

Four responsibilities clarify the unit of substitution: preparing the dossier, examining
it, making an authorized decision effective, and carrying out the resulting actions.
An organization may allocate several of these responsibilities to one person, or distribute
them across project teams, specialists, and decision owners. In the experiment, the agent
performs the specified review and can use authorized simulated actions. The generator supplies
the dossier; the trace records requested remediation and commitments. A deployment's labor
account follows each responsibility wherever it is performed.

For example, a readiness review may identify a draft runbook, require its completion, and
record an authorized conditional decision. The review has then produced its required output;
the runbook still has an owner and an open completion condition. The Falcon trace in
Section~\ref{sec:benchmark} records precisely this separation. It explains how an agent can
replace a review task while another person or system continues the operational work triggered
by that decision. Extending substitution to that operational work requires its own accepted
output and authority.

%% file: sections/05_labor.tex
\section{When agents replace human gate work}
\label{sec:substitution}

\paragraph{A complete account.}
At fixed case volume $\lambda>0$, let $h_0>0$ be mean baseline handling hours. A frozen routing
rule sends a fraction $q$ of cases to substantive human exception handling. Let $h_E$ be
complete human hours on those cases, $h_R$ ordinary-path human review, and $h_W$ additional
rework per case. Recurring human upkeep is $B_H$ in the baseline and $B_A$ after deployment.
Then
\begin{equation}
 L_H=\lambda h_0+B_H,\qquad
 L_A=\lambda\{qh_E+(1-q)h_R+h_W\}+B_A.
 \label{eq:short-hours}
\end{equation}
Exceptions include their preliminary review; ordinary review includes mandatory signatures
and expected sampled audits. Rework, downstream repair, supplier effort, evaluation, and
policy maintenance are allocated once across the common process boundary. Changing a
department or job title does not remove the hours. Initial implementation is reported
separately for a transition-cost calculation; recurring adaptation belongs in $B_A$.

The fraction of cases retained by humans is generally different from the fraction of work
retained. Let $T$ be baseline handling time, $J$ indicate exception routing, and
$h_0=\mathbb E[T]$. Define the residual's baseline labor share
$\phi=\mathbb E[TJ]/h_0$, its effort multiplier
$\eta=h_E/\mathbb E[T\mid J=1]$, and the ordinary group's retained review fraction
$\mu=h_R/\mathbb E[T\mid J=0]$. Assume both conditional baseline means are positive and
$0<q<1$; Equation~\eqref{eq:short-hours} covers the other cases. With
$r=h_W/h_0$ and $b_s=B_s/(\lambda h_0)$, the workload ratio is
\begin{equation}
 \rho:=\frac{L_A}{L_H}
 =\frac{\phi\eta+(1-\phi)\mu+r+b_A}{1+b_H}.
 \label{eq:short-rho}
\end{equation}
This is an accounting identity, not an estimated effect of agents.

Table~\ref{tab:labor-measurement} identifies what must be recorded to use the identity in an
organization. A common case identifier links human activities before and after routing. The
comparison follows the same work across teams and suppliers, rather than assuming that an
hour removed from one department has disappeared from the process.

\begin{table}[!htb]
\centering\small
\caption{Operational measurements behind the labor account. Conditional means use the same
frozen routing partition; time categories must not overlap.}
\label{tab:labor-measurement}
\begin{tabularx}{\linewidth}{@{}P{2.5cm}LL@{}}
\toprule
Quantity & Required observation & Interpretation\\
\midrule
$\lambda,h_0$ & Review visits and baseline hours & Scale and baseline intensity\\
$q,\phi$ & Routed cases and their baseline hours & Case coverage versus retained labor\\
$h_E,\eta$ & Complete exception effort & Assistance or inflation on the residual\\
$h_R,\mu$ & Ordinary review and audit effort & Human work remaining on the standard path\\
$h_W$ & Additional correction and rework & Work created beyond the recorded paths\\
$B_H,B_A$ & Allocated recurring support & Human upkeep in both regimes\\
\bottomrule
\end{tabularx}
\end{table}

\begin{proposition}[Selection and the substitution frontier]
\label{prop:short-frontier}
Let $T\ge0$ be integrable with positive mean, and let
$e(t)=\Pr(J=1\mid T=t)$. Under the definitions above,
\begin{equation}
 \phi=q+\frac{\operatorname{Cov}(T,e(T))}{h_0}.
 \label{eq:short-selection}
\end{equation}
Thus routing that is nondecreasing in baseline effort retains at least as much baseline
labor as its share of cases. For any target fractional workload reduction $d\in[0,1]$,
achieving $L_A\le(1-d)L_H$ is equivalent to
\begin{equation}
 \phi\eta+(1-\phi)\mu+r+b_A\le(1-d)(1+b_H).
 \label{eq:short-target}
\end{equation}
\end{proposition}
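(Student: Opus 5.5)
The plan has three steps: derive the selection identity from the tower property of conditional expectation, get the monotonicity claim from a covariance inequality for two nondecreasing functions of $T$, and obtain the target condition by dividing both sides by $L_H$ and using the workload ratio in Equation~\eqref{eq:short-rho}.

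\emph{Selection identity.} Since $J$ is an indicator, $\mathbb E[J\mid T]=e(T)$. Because $T\ge0$ is integrable and $0\le J\le 1$, the product $TJ$ is integrable. The tower property then gives
\[
\mathbb E[TJ]=\mathbb E\bigl[T\,\mathbb E[J\mid T]\bigr]=\mathbb E[T\,e(T)].
\]
Expanding the covariance gives
\[
\mathbb E[T\,e(T)]=\mathbb E[T]\,\mathbb E[e(T)]+\operatorname{Cov}(T,e(T)).
\]
This covariance is finite because $e$ is bounded and $T$ is integrable. Next, $\mathbb E[e(T)]=\Pr(J=1)=q$ and $\mathbb E[T]=h_0$. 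Dividing by $h_0>0$ and using $\phi=\mathbb E[TJ]/h_0$ yields Equation~\eqref{eq:short-selection}. The only subtlety is measure-theoretic: $e$ must be taken as a measurable version of the conditional probability.

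\emph{Monotonicity claim.} Suppose routing is nondecreasing in baseline effort, meaning $e$ is nondecreasing. Then $T$ and $e(T)$ are both nondecreasing functions of the same random variable. By Chebyshev's association inequality, their covariance is nonnegative. To prove this, take an independent copy $T'$ of $T$. The quantity $(T-T')(e(T)-e(T'))$ is pointwise nonnegative, and its expectation equals $2\operatorname{Cov}(T,e(T))$. Hence $\phi\ge q$. I expect this to be the main point needing care. The integrability of the product follows from boundedness of $e$, so the argument does not require $T$ to have a finite second moment.

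\emph{Target condition.} By Equation~\eqref{eq:short-hours}, $L_H=\lambda h_0(1+b_H)>0$. Therefore $L_A\le(1-d)L_H$ holds if and only if $\rho\le 1-d$. Substituting Equation~\eqref{eq:short-rho} and multiplying through by the positive number $1+b_H$ gives Equation~\eqref{eq:short-target}.

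This step requires verifying the identity behind Equation~\eqref{eq:short-rho} within the proof. Write
\[
qh_E=q\,\eta\,\mathbb E[T\mid J=1]=\eta\,\mathbb E[TJ]=\eta\phi h_0,
\]
and similarly
\[
(1-q)h_R=\mu\,\mathbb E[T(1-J)]=\mu(1-\phi)h_0.
\]
Both rewritings are valid because $0<q<1$ and the conditional means are positive. Dividing $L_A$ by $\lambda h_0$ then gives the numerator $\phi\eta+(1-\phi)\mu+r+b_A$. Since every step is an equivalence, the conclusion holds for every $d\in[0,1]$.
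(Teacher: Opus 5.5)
Your proof is correct and follows the paper's own argument step for step. Both use the tower property to get $\mathbb E[TJ]=\mathbb E[Te(T)]$, expand the covariance using $\mathbb E[e(T)]=q$, represent $2\operatorname{Cov}(T,e(T))$ with an independent copy for monotone routing, and split $h_0$ across the two routing groups before multiplying by the positive denominator; your integrability remarks (no second moment is needed because $e$ is bounded) simply make explicit what the paper leaves implicit.
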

\begin{proof}
Conditional expectation gives $\mathbb E[TJ]=\mathbb E[Te(T)]$. Expanding the covariance
and using $\mathbb E[e(T)]=q$ proves Equation~\eqref{eq:short-selection}. For independent
copies $T,T'$, twice this covariance is
$\mathbb E[(T-T')(e(T)-e(T'))]$, which is nonnegative when $e$ is nondecreasing.
Splitting $h_0$ between the two routing groups yields
$qh_E/h_0=\phi\eta$ and $(1-q)h_R/h_0=(1-\phi)\mu$.
Substitution into Equation~\eqref{eq:short-hours} proves
Equation~\eqref{eq:short-rho}; multiplication by its positive denominator gives the target
condition.
\end{proof}

The practical consequence is that an agent success rate cannot be read as the percentage
of human labor removed. For example, 20\% of cases retain 40\% of baseline labor when their
baseline handling time is twice the overall mean. If their post-deployment effort rises by
50\%, those exceptions alone consume 60\% of baseline handling hours. This is the
mechanism by which a small exception queue can prevent a large workforce reduction.

The selection distinction is measurable before interpreting any automation score. Apply a
candidate routing rule to cases for which baseline effort is recorded, then compare the
hours associated with its residual against total hours. This estimates $\phi$ for that
partition; deployment is still needed to estimate $\eta$ and $\mu$. If the routing rule or
case mix changes, the partition must be remeasured or reweighted. Holding $q$ constant is
insufficient: two queues containing the same number of cases can retain different shares
of baseline labor. Random auditing can add another reason for human involvement without
making those selected cases inherently difficult.

\paragraph{Reliability consumes the same budget.}
If erroneous accepted outputs occur on a fraction $p$ of all cases and cause additional human
repair averaging $h_C$ hours conditional on such an error, their contribution to $r$ is
$p h_C/h_0$, counted once.
This links execution reliability to the labor target, but does not make low repair cost a
quality guarantee. An undetected harmful error is unacceptable even if it consumes no repair
hours. Strict benchmark failures, human escalations, and erroneous accepted outputs are
different events; the experiments do not measure deployment values of $q$ or $h_C$.

\paragraph{Coverage does not determine required labor.}
The extended manuscript's reproducible synthetic example allocates 140 baseline FTE
across nine governance populations, at 120 useful hours per FTE per month. Each population
has 100 pooled review visits, exception fractions from 0.16 to 0.33, and exception baseline
means twice its overall mean. The workload-weighted exception fraction is 23.43\%, while
the groups retain 46.86\% of baseline labor. With these partitions unchanged, the
calculated requirements are shown in Figure~\ref{fig:labor}.
The scenario parameters $(\eta,m,h_W,B_A)$ are respectively
$(0.5,0.10,0,0)$, $(1,0.10,0,120)$, $(1,0.10,0.5,120)$, and
$(1.5,0.35,2,240)$. Here $h_R=m h_0$, so $m$ differs from the group-relative $\mu$;
$h_W$ is hours per case and $B_A$ hours per population per month. Baseline upkeep is zero.
The paired population vectors are baseline FTE $(10,10,30,20,10,25,15,10,10)$ and
$q=(0.22,0.33,0.18,0.16,0.22,0.24,0.26,0.33,0.33)$.
\begin{figure}[!htb]
\centering
\includegraphics[width=0.88\linewidth]{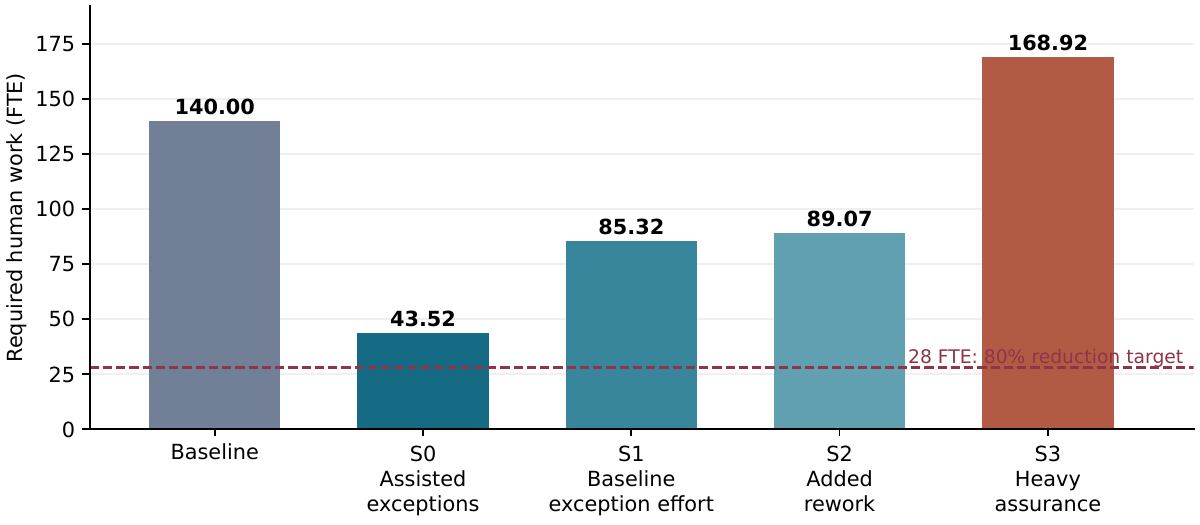}
\caption{Illustrative human-work requirements with the same case partition: baseline 140 FTE;
four operating assumptions produce 43.52, 85.32, 89.07, and 168.92 FTE. The dashed line is the
author's 80\% reduction target, not a fitted forecast. Parameters and calculation are released.}
\label{fig:labor}
\end{figure}
These are sensitivity calculations using declared assumptions, not measurements or
forecasts. The comparison establishes a useful non-identification result: even fixed
coverage and a fixed baseline case partition do not determine the sign of the labor
effect. Their associated quality is not estimated. A deployment must measure both the
contract outcomes and the complete account to identify whether execution has actually
been substituted at acceptable quality.
None of these four scenarios reaches an 80\% reduction. With $\phi\simeq0.4686$ and no other
residual burden, that target already requires $\eta\le0.20/\phi\simeq0.427$.
Thus the forecast requires further reduction in exception effort or in its retained labor
share, even before allowing for ongoing review and support.

\paragraph{A worked calculation.}
The Architecture population supplies a self-contained example of the normalization. Its
30 baseline FTE give $h_0=30\times120/100=36$ hours per visit. With $q=0.18$ and residual
baseline effort twice the overall mean, an exception originally requires 72 hours. The
ordinary group's baseline mean must therefore be
$(36-0.18\times72)/0.82\simeq28.0976$ hours. The two conditional means reproduce the same
36-hour baseline; assigning 36 hours to both groups would erase the selection effect.

Under the first scenario, $\eta=0.5$ reduces exception effort to 36 hours, while
$m=0.10$ gives ordinary review of 3.6 hours. Mean human effort is
$0.18\times36+0.82\times3.6=9.432$ hours, hence
$100\times9.432/120=7.86$ FTE. Under the second scenario, exceptions retain 72 hours:
case work requires 13.26 FTE, and 120 monthly support hours add one FTE, producing 14.26.
The third scenario adds 0.5 hours on each of 100 visits, or $50/120$ FTE, producing 14.68
after rounding. Coverage is unchanged throughout. These calculations use the declared
synthetic parameters and illustrate how effort and upkeep, rather than the case fraction
alone, determine the result.

The first scenario also illustrates why review ratios need their denominator stated:
$m=3.6/36=0.10$, whereas $\mu=3.6/28.0976\simeq0.1281$. Substituting $m$ for $\mu$ in
Equation~\eqref{eq:short-rho} would undercount review. The group-relative form makes
selection explicit; the dimensional equation is usually the simplest way to calculate
an observed operating total.

\paragraph{Volume and the support boundary.}
Recurring support is not automatically proportional to visits. With otherwise fixed
conditions, more visits spread the same $B_A$ over more cases and lower its normalized
burden $b_A$. Additional policy variants, incidents, or service requirements can instead
increase support. The model therefore retains the recorded hours; it does not assume that
scale makes them vanish. Shared services must be allocated once across the evaluated
workflow, including supplier work needed to produce and maintain usable evidence.

Repeated reviews also require a consistent convention. A failed submission may generate a
new visit or additional rework within its original visit, but the same activity cannot be
counted both ways. If automation changes visit frequency, equal visits no longer mean equal
governed output. The deployment comparison should then follow complete cases to the same
endpoint and account for each regime's visits. The fixed-volume frontier isolates the
effort mechanism; it does not remove the need to measure changed demand or repeated work.

\input{sections/05_route_scale}

\paragraph{From required work to staffing.}
At fixed useful capacity $K>0$, required workload-equivalent FTE equal $L/K$. Under the
additional assumptions of interchangeable workers, target utilization $0<u\le1$, no separate
coverage constraint, and staffing chosen to minimize labor cost, required headcount is
$\lceil L/(uK)\rceil$. Reducing required hours then removes posts when this integer crosses
a staffing threshold. Redeployment or increased demand can absorb the released capacity;
they do not make execution of the old tasks indispensable again. For a finite collection
of gates with bounded visit counts, vanishing human case effort leaves a support floor
$B_A/K$; zero total human labor further requires that this support work be eliminated or
automated. These are explicit conditions for displacement, not a claim that current
benchmark performance has already satisfied them.

%% file: sections/05_route_scale.tex
\subsection{Route volume and the allocation of recurring support}
\label{sec:route-scale}

The same platform can have different effects on the human work required by different routes.
The extended manuscript's second illustration makes this visible using one specialist team
with 100 monthly review visits: five Integrate and 95 Build. Each visit has a baseline mean
of 24 hours, so the combined 2,400 hours correspond to 20 FTE at the same 120-hour convention.
These are selected operating assumptions, with review visits as the unit; the route names
do not imply that their observed enterprise frequencies have been estimated.

Both routes retain $h_R=2.4$ ordinary review hours and $h_W=0.5$ additional rework hours per
visit. Each is allocated $B_A=66$ recurring support hours per month and $B_H=0$.
Define $\kappa=\mathbb E[T\mid J=1]/h_0$, so $\phi=q\kappa$ and
$h_E=\eta\kappa h_0$. Table~\ref{tab:route-scale} supplies the remaining inputs and results.
The conditional ordinary baseline means are fixed by the common 24-hour mean; for Integrate,
this gives $(24-0.60\times1.4\times24)/0.40=9.6$ hours. Thus the exception partition and
baseline remain coherent.

\begin{table}[!htb]
\centering\small
\caption{Synthetic route-specific labor requirements on the same platform. The final row
changes only Integrate's visit volume; it is an alternative scale calculation, not a third
route added to the 20-FTE population.}
\label{tab:route-scale}
\begin{tabular}{@{}lrrrrrr@{}}
\toprule
Scenario & Visits & $q$ & $\kappa$ & $\eta$ & Support h/visit & $\rho$\\
\midrule
Integrate & 5 & 0.60 & 1.4 & 1.0 & 13.20 & 1.451\\
Build & 95 & 0.14 & 2.0 & 0.8 & 0.69 & 0.360\\
Integrate at higher volume & 95 & 0.60 & 1.4 & 1.0 & 0.69 & 0.930\\
\bottomrule
\end{tabular}
\end{table}

For Integrate at five visits, Equation~\eqref{eq:short-hours} gives
$L_A=5\{0.60\times33.6+0.40\times2.4+0.5\}+66=174.10$ hours against 120 baseline
hours. Build requires $L_A=95\{0.14\times38.4+0.86\times2.4+0.5\}+66=820.30$ hours
against 2,280. Their combined 994.40 hours equal 8.29 FTE and a ratio of 0.414. The combined
reduction therefore coexists with an increase on the low-volume route. At 95 Integrate visits,
the same case effort and 66 support hours give a ratio of 0.930: the larger denominator spreads
the fixed support term over more accepted review work.

This calculation identifies a concrete FDE implementation choice. Reusing maintained
connectors, policy tests, and evidence interfaces can distribute recurring work across a
larger review population, while specialized exceptions still require their own effort.
The operating ledger must record that allocation once and retain route-level results alongside
the total. This lets an organization identify where task substitution already reduces required
work and where a different operating design is needed. Quality and service remain separate
acceptance conditions; the table evaluates the labor consequences of the stated assumptions.

%% file: sections/04_deployment.tex
\section{The FDE implementation: engineering the whole workflow}
\label{sec:fde}

Forward deployed engineers (FDEs) work alongside organizations to turn software capabilities
into deployed systems; embedded implementation is explicit in the role described by
\citet{palantirfde2026}. Here the role has a concrete purpose: construct the information,
policy, evidence, and authorization interfaces that let an accepted review be completed without
requiring a person to reconstruct it. The job title itself does not establish an advantage.
The engineering contribution is an operating arrangement: which tasks transfer to software,
which commitments that software may make, and who handles the work it cannot complete.

Figure~\ref{fig:implementation} separates responsibilities. Agents can investigate a dossier,
extract candidate facts, interpret narrative material, and propose findings. Where policies
are executable, a deterministic service evaluates them. Evidence services attach copied source
values and locations rather than asking a model to reproduce quotations. Mandate checks govern
commitments. The system escalates cases for which its evidence or delegated scope is inadequate.
These are proposed deployment components, not a claim that this complete architecture was
evaluated by the current benchmark.

\begin{figure}[!htb]
\centering
\begin{tikzpicture}[>=Latex,box/.style={draw=black!65,rounded corners=2pt,fill=black!3,
  text width=3.05cm,minimum height=1.25cm,align=center,font=\small},node distance=0.38cm]
\node[box] (sources) {Source access\\Documents\\and records};
\node[box,right=of sources] (agent) {Agent review\\Read, investigate,\\propose};
\node[box,right=of agent] (check) {Contract checks\\Policy, evidence,\\mandate};
\node[box,right=of check] (output) {Decision record\\Evidence\\and conditions};
\draw[->,thick] (sources)--(agent);
\draw[->,thick] (agent)--(check);
\draw[->,thick] (check)--(output);
\node[box,below=0.65cm of check,text width=4.2cm] (human) {Human exception handling\\Missing facts, disputed policy,\\or undelegated authority};
\draw[->,thick] (check)--(human);
\draw[->,thick] (human.west)-|(agent.south);
\end{tikzpicture}
\caption{Proposed gate implementation. Every human contribution, including support of these
services, belongs in the labor account. Passing a technical check alone is not a certificate
of complete substantive correctness.}
\label{fig:implementation}
\end{figure}

\subsection{Specify the transfer before building the agent}

The first deliverable is a named task boundary agreed with the process owner. It identifies
the input population, accessible systems, output record, service requirements, and decisions
that remain outside the mandate. ``Review an architecture'' is too broad to assess a transfer.
``Evaluate this submitted design against these network and recovery requirements, preserve
unresolved conditions, and return a supported disposition'' identifies observable work.
Implementing the requested design changes is a separate task unless the contract includes it.

An acceptance set should cover approved, deficient, incomplete, conflicting, and out-of-scope
inputs. Changing a material fact should change the required finding or disposition; changing
only field order should not alter its meaning. Missing information must trigger the declared
request or abstention behavior. These proposed tests expose contract defects before a model
score is interpreted as substitution. The policy owner resolves disputed obligations; the
engineer records that resolution rather than allowing a model to silently invent policy.

\subsection{Make evidence usable without reconstructing the dossier}

Source access requires more than a directory of documents. For every material premise, the
implementation identifies its authoritative source, accessible representation, applicable
version, and treatment of conflicting observations. The source counterexample in
Section~\ref{sec:contract} gives a direct acceptance test: the permitted interface must
distinguish cases whose required outputs differ, or permit an explicit request for the
missing information. Preparing a snapshot manually for every project belongs in the human
work account; it cannot be treated as free model input in a deployment comparison.

The evidence service should preserve the observation used by the reviewer and attach its
source location to the finding. A returned record can contain the case identifier, source
version, field path, copied value, and observation time. This addresses the observed
serialization failures without requiring a model to reproduce source text exactly. However,
a copied value can still be irrelevant or incomplete. Checking that an excerpt was observed
and checking that it supports every required premise remain distinct acceptance criteria.

There is a tradeoff between a uniform evidence interface and the cost of maintaining it.
Normalization may reduce repeated interpretation downstream, while connectors, changing
schemas, and access controls add upkeep. Both effects need measurement. The interface should
report inaccessible or stale sources explicitly; substituting a plausible value would hide
the information failure from the next gate.

\subsection{Separate policy execution, commitments, and failure handling}

When policy is executable, the deterministic control supplies a reason to implement that
kernel directly. Agents can propose findings and gather inputs around it. This division
does not transfer responsibility for choosing the policy: responsible owners must approve
its version and exceptions. Acceptance tests should include threshold cases, conflicts
between rules, and cases requiring a refusal. The released benchmark establishes performance
under its supplied policies, not the correctness of an organization's policy conversion.

Authority enforcement needs a separate check of who may commit, what scope is covered, and
which conditions remain open. A model asking for approval is not evidence that approval is
permitted. The rejected conditional-approval requests in the experiment show the practical
importance of that distinction. A denied request must remain denied in the final record;
an executed conditional approval must retain the underlying finding until separate evidence
establishes closure. Repeated calls must not create duplicate commitments.

Failure behavior is part of the same contract. A timeout, exhausted tool budget, unavailable
source, or malformed final record must produce the specified failure state. In particular,
finalization must not silently supply a default approval, as the observed placeholder error
illustrates. More retries may improve completion but consume time and resources; retries
cannot resolve a missing mandate or make an unsupported premise true.

\subsection{Preserve obligations across gate interfaces}

Common interfaces matter even when rollout is incremental. Consider an Architecture review
accepting a design only if a private endpoint is added. Security needs the condition, its
owner, and its fulfillment status; an unqualified approval would change the meaning of the
upstream review. A handoff should therefore preserve the disposition, unresolved findings,
actions, evidence references, policy version, and effective authorization. General must
consolidate that state rather than infer closure from the presence of an approval label.

A useful acceptance test follows one unresolved condition through the entire route. Its
status must change only when the declared closure evidence is supplied. Another deliberately
replays an earlier record after its source or mandate changes: the system should follow the
predeclared revalidation rule rather than silently treat an old decision as current. These
are proposed integration tests, not additional experiments reported here. Their purpose is
to prevent local convenience from creating verification work for downstream teams.

\begin{table}[!htb]
\centering\small
\caption{Proposed engineering deliverables and acceptance checks. Human hours are allocated
once to the account in Section~\ref{sec:substitution}; these are deployment requirements,
not measured savings.}
\label{tab:fde-deliverables}
\begin{tabularx}{\linewidth}{@{}P{2.65cm}LP{3.4cm}@{}}
\toprule
Deliverable & Acceptance criterion & Human-work location\\
\midrule
Source interface & Recover required premises or report their absence; preserve source identity
& Preparation and verification; recurring connector upkeep in $B_A$\\
Policy and mandate service & Apply the specified version; reject unauthorized commitments;
retain unresolved conditions & Ordinary checks in $h_R$; exceptional decisions in $h_E$\\
Evidence and handoff record & Findings retain observed support and their open obligations
through downstream reviews & Repeated review in $h_R$ or $h_E$; additional repair in $h_W$\\
Operating ledger & Record case work and shared support across teams and suppliers
& Case work in $h_E,h_R,h_W$; shared recurring effort in $B_A$\\
\bottomrule
\end{tabularx}
\end{table}

\subsection{Roll out against quality and labor criteria}

A proposed rollout starts with a frozen contract and representative operating cases, then
tests the interfaces without allowing unapproved commitments. Shadow operation can reveal
incorrect decisions and missing evidence, but it adds an agent beside existing work; it
does not by itself demonstrate substitution. Transfer of a bounded task should begin only
under a declared mandate, with the required quality and service criteria fixed beforehand.
The retained human path needs the information necessary to resolve the exception, not merely
an error message that forces the specialist to start again.

Stopping criteria belong in the operating agreement. Unauthorized commitments, unsupported
material findings, loss of open conditions, or failure of a required source trigger the
specified containment and escalation response. Persistent deterioration in quality or
service stops expansion even if more cases appear automated. A change in policy, source
interface, or model configuration requires the relevant acceptance checks before expanding
its use. A validator that escalates everything can satisfy a conservative commitment rule
while transferring no execution; both acceptance and escalation volumes must be reported.

The economic check uses the same population and boundary as the baseline. Record ordinary
review, exception handling, additional correction, and recurring engineering separately,
including project-team and supplier effort. Initial integration is a transition investment;
continued adaptation belongs in $B_A$. A reduction in inference cost does not offset an
unrecorded increase in specialist hours when the claim concerns human-work substitution.
If a task no longer needs case-by-case human execution, its released hours count even when
the organization redeploys the affected staff. If every output needs a lengthy signature
or reconstruction, that work remains in $h_R$ or $h_E$.

These deliverables explain the proposed DGF-first strategy. An FDE implementing an enterprise
agent encounters governance during deployment; the interfaces built to navigate its gates
can also support execution of their reviews. Recurring evidence, policies, and decision
formats create opportunities for reuse. High-volume business operations may nevertheless
justify earlier automation. The claim is a plausible engineering sequence, not an observed
adoption ranking. What ultimately distinguishes substitution is an accepted task delivered
with less total human work, rather than the number of reports generated or the engineer's
job title.

%% file: sections/02_experiment.tex
\section{Evidence: executing governance contracts}
\label{sec:benchmark}

\subsection{Design, dossiers, and information conditions}
DGF-Bench tests the execution of a specified review task: inspect permitted evidence, choose a
disposition, identify findings and required actions, cite the supporting observations, and respect
the authorization contract. A seeded generator constructs 300 fictional projects, 100 each on
Buy, Integrate, and Build routes (Figure~\ref{fig:dgf-routes}). Buy and Integrate each contain six
gates; Build contains five, giving 1,700 gate occurrences per model.
The dossiers include project charters and review requests, architecture diagrams,
technical designs, supplier and legal records, and operational test evidence. Decision-coverage
sampling balances applicable outcomes. Difficulty setting 4 controls generator parameters;
it is not a validated scale of reasoning difficulty. The 300 projects have distinct architecture
and complete decision-fact signatures, although some gate-level patterns recur.

Each project begins with canonical facts: an owner, a business context, requested budget, users,
data classification, architecture, and operational or contractual constraints. Templates derive
the dossier from those facts. A Buy case may expose supplier offers, due-diligence records, and
contract status; a Build case may expose readiness tests, network arrangements, and a runbook.
The environment can mark evidence missing, stale, or inconsistent. The same program defines
the public policy and evaluator reference, so successful execution measures conformity to that
declared policy rather than independently validated professional judgment.

Decision balancing broadens coverage of approval, refusal, and intermediate outcomes. It does
not make projects independent samples of enterprise demand. The released diversity audit finds
267 distinct whole-case finding patterns among the 300 projects; gate-specific relevant-fact
signatures include 142 of 200 Compliance cases and 241 of 300 Security cases. Phase-adjusted
normalized decision entropy ranges from 0.9971 to 1.0000. These statistics describe diversity
under the generator's representation; they do not certify equivalent diversity of organizational
contexts or difficulty. Difficulty 4, for example, sets the pre-balancing architecture IP-overlap
probability to 0.20 and shared-service-principal probability to 0.22. The accepted distribution
also depends on decision-balancing rejection sampling.

The tested information condition supplies both executable policies and an authoritative
structured \texttt{REVIEW\_FACTS} snapshot. Models can read allowed evidence and use simulated
enterprise tools; evaluator-only reference files are inaccessible. The model is the reviewer,
not the generator of the project. Every gate returns one of \texttt{GO},
\texttt{GO\_WITH\_RESERVATIONS}, \texttt{REWORK}, \texttt{SUSPENSION}, or \texttt{NO\_GO}.
Correctly refusing a deficient proposal counts as success. Later gates receive the model's actual
upstream reviews, and General consolidates the route.

The policy supplies objectives, permitted dispositions, finding identifiers, required actions,
and executable rule definitions. A gate prompt gives the relevant project context and output
schema. The agent chooses which allowed sources to inspect, can query simulated system records
or request evidence, and can inspect an authorization mandate. A final structured submission
records its decision, findings, actions, evidence references and excerpts, authorization flag,
and rationale. Simulated actions change only the test environment. A schema-valid submission
can still be substantively wrong, and requesting an action does not establish its completion.

\begin{figure}[!htb]
\centering
\includegraphics[width=\linewidth]{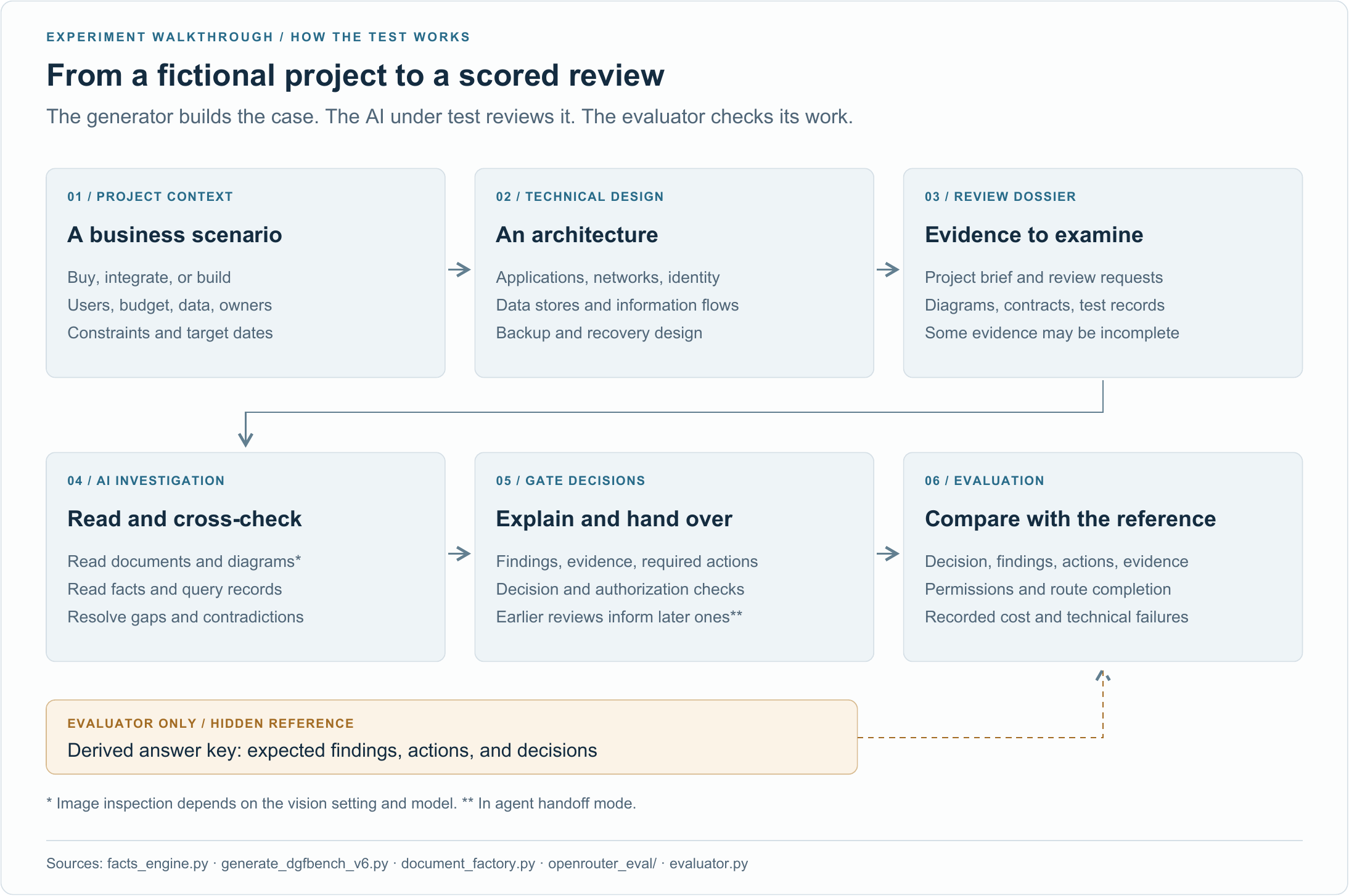}
\caption{Implemented experimental workflow. A program generates the case and evidence; the tested
model investigates and submits reviews; a deterministic evaluator scores the retained trace.
Public authoritative facts are available in addition to documents.}
\label{fig:experiment-workflow}
\end{figure}

\subsection{Execution and scoring protocol}

Three exact OpenRouter endpoint identifiers were evaluated on 22--23 September 2026:
\nolinkurl{google/gemini-3.8-flash}, \nolinkurl{openai/gpt-5.6-luna}, and
\nolinkurl{deepseek/deepseek-v4.1-flash}. Temperature was zero, vision was \texttt{auto},
and each gate permitted at most 20 turns, 40 tool calls, and 8,192 output tokens per turn.
Of 900 planned model--project runs, 899 and their 5,094 gates are evaluable. One Gemini
Integrate run failed at its first gate because of a provider generation error; its six gates
are excluded. Recorded failed attempts remain in the cost accounting and released traces.

The models are deployed endpoints, without training or fine-tuning in this study. Gemini was
recorded through Google AI Studio and Luna through OpenAI; DeepSeek used multiple OpenRouter
providers. The identifiers are provider-reported, not independent audits of model weights.
Gates run sequentially within each dossier, while dossiers can run concurrently. Interrupted
collection resumes completed work and valid gate checkpoints rather than selecting the best
answer from competing trajectories. No final model-identifier mismatch or agent-protocol failure
is recorded among the evaluable original runs. Recorded truncated responses number 14 for
Gemini, zero for Luna, and two for DeepSeek; they are not equivalent to incomplete dossiers.

The frozen deterministic evaluator requires a correct disposition, exact finding and action
sets, observed evidence references and exact observed field/value excerpts, and the correct authorization
indicator. A \emph{strict gate success} passes every component; a \emph{complete route} passes
every gate. Disposition accuracy alone therefore does not measure a complete review. Validated
conditional approvals can change the effective reference outcome. A false approval proceeds
when that effective reference requires rework, suspension, or refusal; a critical miss omits
a finding classified as critical by the policy.

For a model, strict gate success divides successful gates by all gates in its evaluable dossiers;
complete-route success divides dossiers whose every gate succeeds by the evaluable dossier count.
A six-gate route therefore contributes six observations to the gate rate, while a five-gate route
contributes five. Passing four of five gates yields 80\% gate success within that case and zero
complete-route success. Finding and action sets must each attain F1 equal to one; averaging good
components does not compensate for a missing required component. The frozen protocol is
\texttt{DGF-decision-v8.2-authorized-review}. Authorization checks replay validated mandates and
actions; General uses effective upstream opinions after permitted conditional approvals.

This separates the quality of the final record from the behavior leading to it. A tool may
reject an impermissible commitment, after which the model submits a valid refusal. Conversely,
a model can write an appropriate explanation but submit the wrong disposition field. Both the
structured outcome and preceding tool events are retained, allowing these mechanisms to be
examined instead of inferring them from a single aggregate score.

\subsection{Deterministic control and aggregate results}

A deterministic control reads the same public snapshots and executes the public policy functions,
then writes its predictions before a separate process invokes the frozen scorer. It does not
read hidden references during generation, request optional conditional approvals, or extract
facts from documents. It can quote full snapshots without the models' per-turn output-token
limit. Its perfect score establishes that the formalized review kernel is software-executable.
The agent comparison therefore measures execution of that supplied kernel and its evidence
contract. An incremental contribution to fact recovery requires the matched-information
comparison specified in Section~\ref{sec:implications}.

The control constructs findings and aggregates dispositions from the public interface. It
propagates its own preceding decisions to General. A read guard blocks evaluator-only files,
and generation asserts that the scorer was not imported; a second process then scores the
saved outputs. Retaining the base disposition without requesting optional risk acceptance is
permitted, so conforming systems need not submit identical labels when one uses a valid mandate.
The control's 1,700 strict successes and 300 complete routes demonstrate sufficiency of the
prepared policy/fact interface. Its zero model API expenditure excludes programming and local
computation, and it supplies no evidence about the cost of producing those inputs.

\begin{table}[!htb]
\centering\small
\caption{Task execution on the original population and repeated trajectories. Counts preserve
the original strict endpoint. Sensitivity accepts specified structural evidence matches;
it is post-hoc and is not a semantic adjudication. The deterministic control was not repeated.}
\label{tab:compact-evidence}
\setlength{\tabcolsep}{4pt}
\begin{tabularx}{\linewidth}{@{}Lrrrr@{}}
\toprule
Outcome & Gemini & Luna & DeepSeek & Rules\\
\midrule
Evaluable projects & 299 & 300 & 300 & 300\\
Evaluable gates & 1,694 & 1,700 & 1,700 & 1,700\\
Correct dispositions & 1,694 & 1,668 & 1,619 & 1,700\\
Strict gate successes & 1,609 & 1,416 & 1,261 & 1,700\\
Strict gate success (\%) & 94.98 & 83.29 & 74.18 & 100.00\\
Complete routes & 230 & 127 & 74 & 300\\
Complete routes (\%) & 76.92 & 42.33 & 24.67 & 100.00\\
Evidence-only failures & 85 & 221 & 338 & 0\\
False approvals / critical misses & 0 / 0 & 1 / 3 & 1 / 1 & 0 / 0\\
Sensitivity: successful gates & 1,678 & 1,454 & 1,313 & ---\\
Sensitivity: complete routes & 284 & 144 & 85 & ---\\
\midrule
\multicolumn{5}{@{}l}{\emph{Follow-up: 15 existing dossiers, three trajectories per model}}\\
Strict gates / 255 & 245 & 211 & 187 & ---\\
Complete routes / 45 & 35 & 19 & 11 & ---\\
Dossiers passing all three / 15 & 9 & 3 & 0 & ---\\
\bottomrule
\end{tabularx}
\end{table}

\paragraph{Overall performance and uncertainty.}
Gemini reaches the correct effective disposition on every evaluable gate and satisfies all
non-evidence components; its 85 strict failures concern evidence conformity. Its strict gate
rate is 94.98\% (95\% interval 93.86--96.04), but only 76.92\% of complete routes pass
(71.82--81.34). Luna and DeepSeek also exhibit a large gate--route gap, alongside substantive
errors: Luna approves one Legal review despite a missing data-processing agreement, and
DeepSeek submits a placeholder \texttt{GO} at forced finalization when address overlap
requires \texttt{NO\_GO}. The model comparison therefore distinguishes successful disposition,
complete documented review, and dependable execution of a whole route.

Gate intervals resample whole cases within routes (2,000 draws, seed 81931); route intervals
use Wilson bounds. They describe the generated population, retaining within-project dependence.
They do not treat 5,094 gates as independent organizations. The common 299-case comparison
preserves the ranking; its paired intervals and complete component tables are released.

Luna's strict gate interval is 81.29--85.24\%, and DeepSeek's is 71.88--76.47\%; their route
intervals are 36.87--47.99\% and 20.13--29.84\%. Pairwise comparisons resample the same 299
common dossiers together within routes using 10,000 draws and NumPy seed 81931. Gemini's gate
advantage is 20.90 percentage points over DeepSeek (95\% interval 18.30--23.44) and 11.75 over
Luna (9.62--13.93); Luna exceeds DeepSeek by 9.15 points (6.67--11.63). These comparisons
control the case mix shared by the models. They do not establish a stable ranking over new
organizations, later endpoint versions, or different information conditions.

\paragraph{Where performance changes.}
Gate families expose different demands (Table~\ref{tab:experiment-gates}). Gemini passes all
IT and Compliance reviews strictly but 85.43\% of Architecture and 87.96\% of General reviews.
Luna exceeds Gemini in Architecture and Tech Readiness while falling behind at Procurement and
General. Thus the overall ordering does not imply dominance on every component of the workflow.
DeepSeek's Procurement decision accuracy is 97\%, while its strict Procurement success is 43\%;
Luna's corresponding rates are 96\% and 33\%. Evidence handling, rather than disposition alone,
accounts for much of this contrast.

\begin{table}[!htb]
\centering\small
\caption{Strict gate success by review family (percent). Columns retain the original generated
table order. Gemini has one fewer Integrate dossier, affecting six of its gate families.}
\label{tab:experiment-gates}
\begin{tabular}{@{}lrrr@{}}
\toprule
Gate & DeepSeek & Gemini & Luna\\
\midrule
\input{generated/table_benchmark_gates.tex} 
\bottomrule
\end{tabular}
\end{table}

At route level, Gemini completes 87 of 100 Buy, 75 of 99 Integrate, and 68 of 100 Build cases.
Luna completes 20, 52, and 55; DeepSeek completes 13, 32, and 29, each out of 100. Route length
alone cannot explain these differences, because route composition and project contents also
change. Sequential handoffs make an upstream error available to General, but this single
handoff condition cannot isolate a causal propagation effect. Luna's missed upstream
\texttt{NO\_GO} in General illustrates why a downstream omission may repeat an earlier problem
rather than constitute an independent safety event.

\input{sections/02_authorized_decisions}

\subsection{Evidence failures and two inspectable traces}

An audit of all 690 gates with a failed evidence component leaves the primary scores unchanged
and applies one declared sensitivity rule: accept exact relevant field/value subsets from one
observed object despite field order or equivalent numeric serialization. No quote is repaired,
no tool observation invented, and no join across different objects promoted. This recovers
69 Gemini, 38 Luna, and 52 DeepSeek gates (Table~\ref{tab:compact-evidence}). Gemini's rate
becomes 99.06\% at gate level and 94.98\% at route level. The interpretation is specific:
much of this model's measured route failure depends on the evidence representation contract.
The audit does not establish full premise coverage or independently validated semantic support.

The distinction between provenance and support also appears in Procurement. Across 127
evidence-failed Procurement reviews, all 205 submitted support entries cite sources actually
read and included in the submission. DeepSeek provides 26 failed support entries citing CSV
sources; 25 are literal rows in observed tool output. The snapshot-only source-ID requirement
rejects those entries before checking their contents. Other authentic excerpts omit a decisive
field, and some combine values from different objects. An observed quotation can therefore
be contract-excluded, relevant but incomplete, or structurally nonliteral. None of these labels
alone is a semantic truth judgment. The released audit preserves these distinctions without
adding another recovered-success rate.

\paragraph{Trace 1: a delegated readiness review.}
In \nolinkurl{DGF-BLD-035200_build}, a generated vendor-portal project, Gemini's Tech Readiness
review finds a draft runbook. It names \texttt{TR-OPS-001}, proposes
\texttt{COMPLETE\_OPERATIONAL\_HANDOVER}, and cites the observed value
\texttt{"runbook\_status": "draft"}. After consulting an active mandate, it invokes the
conditional-approval tool with a requirement to approve the runbook before production cutover.
The tool executes the authorization but explicitly leaves the finding unresolved. The model
submits \texttt{GO\_WITH\_RESERVATIONS} with the finding and action retained, satisfying the
strict contract. This post-hoc example demonstrates a performed review and authorized workflow
transition, with the runbook condition still open. Table~\ref{tab:approval-behavior} places
this event within the complete set of recorded approval requests and final decisions.

The dossier is Project Falcon, a fictional HR vendor portal for 1,000 users, with a requested
budget of EUR~500,000 and confidential data. Its architecture artifact is reproduced in
Figure~\ref{fig:experiment-architecture}. All three models strictly pass its five gates, with
dispositions \texttt{GO}, \texttt{GO}, \texttt{GO}, \texttt{GO\_WITH\_RESERVATIONS}, and
\texttt{GO} in route order. This selected case shows a complete successful route; it was not
selected to estimate a population rate. Image availability does not show that image interpretation
was necessary, since the models could consult structured facts.

\begin{figure}[!htb]
\centering
\includegraphics[width=\linewidth]{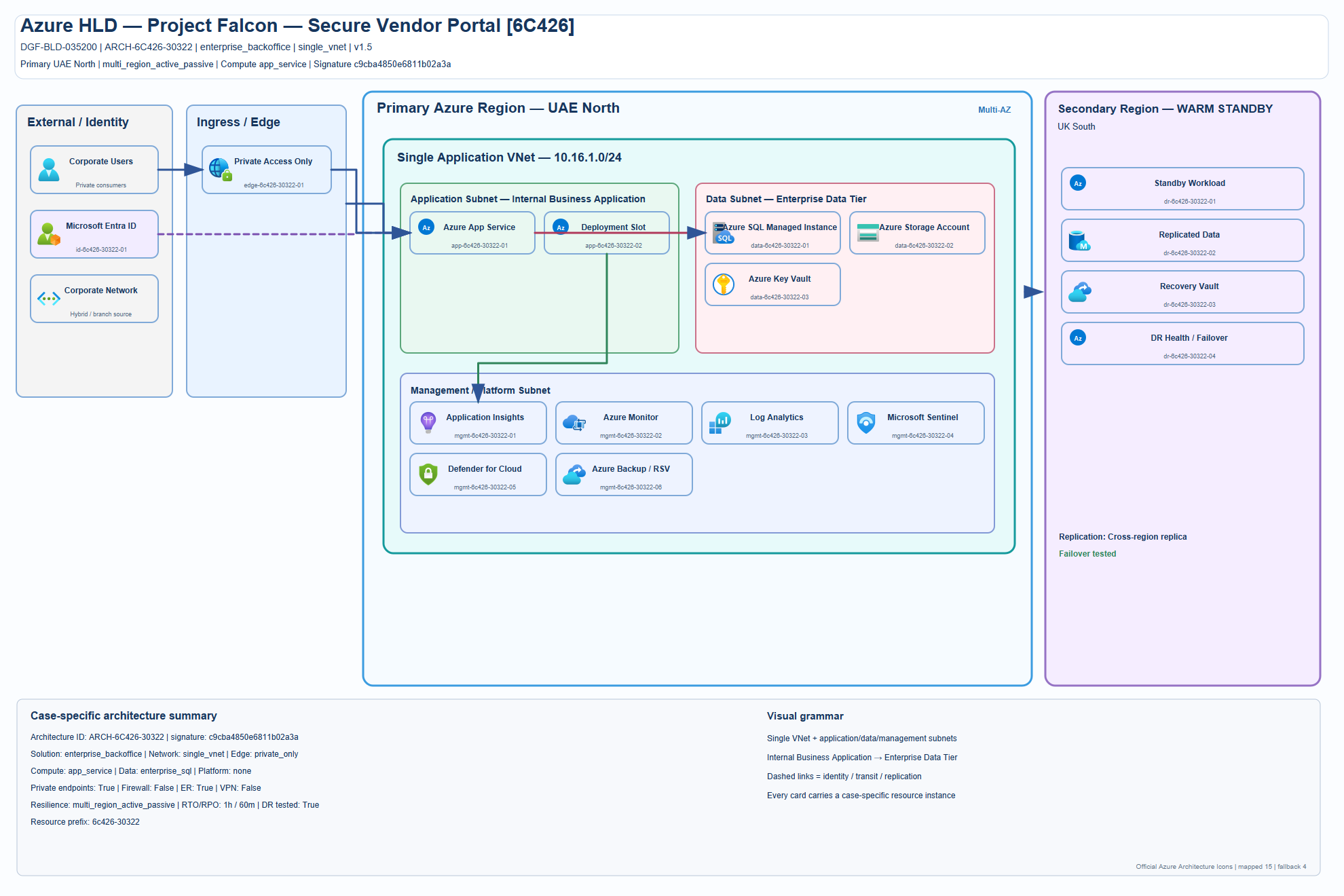}
\caption{The generated architecture document in Project Falcon's actual dossier. The diagram
is review evidence, not an independently validated production design. Identity, application,
data, monitoring, backup, and recovery elements contextualize the readiness example.}
\label{fig:experiment-architecture}
\end{figure}

\paragraph{Trace 2: correct architecture judgment, failed literal proof.}
In \nolinkurl{DGF-BLD-035201_build}, Project Meridian, Gemini reads that an API gateway is
required but absent, and that measured latency is 55~ms against a 20~ms target. It correctly
identifies \texttt{ARCH-API-001} and \texttt{ARCH-PERF-001}, proposes adding the gateway and
remediating latency, and submits \texttt{REWORK}. Two attempted conditional approvals fail:
one omits an open finding and the other includes a finding ineligible for risk acceptance.
The final submission complies with the required refusal to proceed.

Its evidence excerpt puts \nolinkurl{api_gateway_present} before
\nolinkurl{api_gateway_required}; the observed record contains the reverse order. Both values
are correct, but their concatenation is not an exact observed sequence. Evidence fidelity is
0.5 across the two findings, while disposition, findings, actions, and authorization all score
one. Strict success is consequently zero, despite a weighted component score of 0.95. This
trace illustrates why a complete-review metric must make its representation requirements
explicit. Both examples were chosen after evaluation for explanation; their original paths,
outputs, and tool events are released, and unsuccessful cases remain in every aggregate.

\subsection{Repeated trajectories, expenditure, and interpretation}
The follow-up fixes 15 existing dossiers, five per route, before collecting three new
trajectories for every model--dossier pair: 135 completed runs and 765 gates. No best-of-three
selection is used. Gemini, Luna, and DeepSeek pass respectively 35, 19, and 11 of 45 routes;
only nine, three, and zero dossiers pass on all three trajectories. All follow-up false-approval
and critical-miss counts are zero. Provider routing is not fixed, so these observations measure
repeatability of the recorded system rather than intrinsic model randomness. Original and
follow-up recorded API costs are USD~87.015829172 and USD~12.5598870228, totaling USD~99.58;
development, local computation, human review, and deployment are not costed.

The selection uses sorted dossier identifiers and a fixed random seed, 23092026, before the
new outcomes are collected. Each repetition contains 85 gates per model. Gemini's strict
success counts are 82, 82, and 81; Luna's are 73, 71, and 67; DeepSeek's are 63, 64, and 60.
The corresponding complete-route counts are 12, 12, and 11; 8, 7, and 4; and 3, 4, and 4.
Pooling the trajectories retains all three runs rather than selecting a successful attempt.
Complete-route pass/fail varies across repetitions on five Gemini, six Luna, and eight DeepSeek
dossiers. Zero observed all-three success for DeepSeek is not proof of zero population
probability; the follow-up contains only 15 distinct synthetic projects.

The follow-up retains the original endpoints, temperature, vision setting, agent handoffs,
and gate resource limits, with three workers and one concurrent dossier per model. Its archive
contains 48 error checkpoints: 46 API-key-limit refusals and two provider-finish errors. These
are not counted as additional completed runs. Two Gemini trajectories resume seven previously
completed gates, retained once in the final outcomes. Resumption preserves the planned sample;
it does not turn failed attempts into independent observations.

Exploratory intervals resample five whole dossiers within each route, retaining all three
trajectories and all gates together (10,000 draws, seed 24092026). Pooled strict gate success
is 96.08\% for Gemini (95\% interval 93.33--98.43), 82.75\% for Luna (75.29--89.80), and
73.33\% for DeepSeek (66.67--79.61). These rates resemble the original ordering, while the
all-three endpoint exposes dossier-level repeatability. Under the same structural sensitivity
rule, repeated gate counts rise to 249, 217, and 195 out of 255; only four of Gemini's ten
evidence failures are recovered, compared with 69 of 85 in the original study. The original
recovery proportion is therefore not assumed to generalize unchanged.

Original recorded expenditure is USD~71.88 for Gemini, USD~5.05 for Luna, and USD~10.09 for
DeepSeek; the follow-up adds approximately USD~10.39, USD~0.75, and USD~1.42, respectively.
Costs include recorded failed attempts and depend on provider routing, caching, and retry
history. They are reproducible accounting observations for these runs, not permanent prices
or estimates of cost per employee replaced. The repeated-run archive retains key-limit and
provider failures, including resumed prefixes counted once in their completed trajectory.

The information counterexample in Section~\ref{sec:contract} identifies a different failure
boundary: some proposed document-only inputs cannot determine the required decision. Together,
the observed executions and the source check show both an automatable formalized task and
a specific obstruction that must be resolved before extending its scope.
The experiments also separate three engineering objectives: producing the required decision,
preserving an accepted evidence record, and delivering the entire route repeatedly. Improving
one does not automatically improve the others. This decomposition gives an FDE implementation
concrete targets for policy execution, evidence services, and exception handling, while the
labor account determines whether their combined operation substitutes for human work.

%% file: generated/table_benchmark_gates.tex
Architecture & 75.00 & 85.43 & 89.50\\
Compliance & 88.00 & 100.00 & 96.00\\
General & 68.67 & 87.96 & 61.67\\
IT & 87.00 & 100.00 & 97.67\\
Legal & 68.50 & 99.50 & 80.50\\
Procurement & 43.00 & 96.00 & 33.00\\
Security & 72.33 & 98.66 & 92.00\\
Tech Readiness & 71.00 & 89.00 & 97.00\\

%% file: sections/02_authorized_decisions.tex
\subsection{Authorized conditional decisions}

The review contract permits an agent to consult a standing mandate and request conditional
approval for eligible findings. Eligibility requires open findings, permission to accept each
finding under the public policy, and an active mandate for the particular gate and phase.
The environment checks the request and records its effective authorization state. The final
review retains the finding and the required action even when conditional approval is used.
This operation represents bounded delegated decision-making within the synthetic workflow.

Table~\ref{tab:approval-behavior} summarizes the released audit of the original retained
checkpoints. Calls and gates are different units: one gate can contain several requests.
An approval is counted as used when a replay-validated final approval accompanies the model's
\texttt{GO\_WITH\_RESERVATIONS} submission. The all-gate eligibility denominator can differ
between models because authorized upstream decisions affect General's consolidation.

\begin{table}[!htb]
\centering\small
\caption{Conditional-approval behavior in the original runs. The final row restricts the
comparison to the same 391 eligible non-General gates in the 299 shared dossiers. Calls
rejected by the environment are recorded separately from final review outcomes.}
\label{tab:approval-behavior}
\begin{tabular}{@{}lrrr@{}}
\toprule
Recorded quantity & Gemini & Luna & DeepSeek\\
\midrule
Approval calls & 864 & 394 & 216\\
Rejected calls & 466 & 40 & 14\\
Used conditional decisions & 398 & 353 & 202\\
Eligible gates, all scored runs & 398 & 398 & 397\\
Used on matched opportunities / 391 & 391 & 347 & 201\\
\bottomrule
\end{tabular}
\end{table}

On the matched opportunities, the corresponding use rates are 100.00\%, 88.75\%, and 51.41\%.
The public contract permits retaining the base decision as well as requesting an eligible
approval; use frequency is therefore a behavioral measurement. Some approvals preserve a
disposition that was already conditional. Subtracting base-decision agreement from effective
agreement would consequently measure a different quantity. Luna also has one validated
approval followed by a final \texttt{REWORK}, which explains why its 354 validated approval
states yield 353 used conditional decisions.

The recorded outcome belongs to the agent operating with its authority-enforcement tool.
A rejected request is contained by that tool; a successful request remains subject to its
recorded conditions. The synthetic contract checks that condition strings are nonempty;
their operational adequacy has no separate adjudication in this experiment. These explicit
semantics make it possible to examine decision quality, delegated action, and retained
obligations separately in a deployment. The Falcon example below shows one such conditional
decision together with its supporting evidence and open action.

%% file: sections/06_validation.tex
\section{From task substitution to a workforce hypothesis}
\label{sec:implications}

The observations support a positive, bounded conclusion: software can execute the specified
governance-review operations in this engineered environment. The deterministic control makes
the case for an executable decision kernel especially clear. The agents demonstrate another
implementation of many of these operations, with measurable differences in evidence handling,
reliability, and cost. These components can be combined in the proposed deployment architecture.
Its operating evaluation measures the effort required to construct and maintain the interfaces
when facts and policies arrive in less prepared forms.

This is also where the claim about human replacement becomes testable. If a deployed system
performs accepted tasks previously executed by people and satisfies the labor inequality at
unchanged output and quality, it has substituted for part of their work. Calling the remaining
people supervisors does not restore the eliminated execution hours. At fixed demand, fewer
required hours can support fewer posts; actual headcount also depends on allocation, utilization,
redeployment, and new work. The argument therefore permits substantial workforce displacement
without treating it as a measured consequence of this synthetic experiment.

\subsection{The 2033 hypothesis and a falsifiable deployment test}

The author's original hypothesis is that the DGF ecosystem will require at least 80\% fewer
workload-equivalent FTE by 20 September 2033 than during the reference year ending on
20 September 2026, at comparable governed output, quality, and service. The boundary includes
preparation, exceptions, verification, remediation of review errors, maintenance, supervision,
and supplier support. On an illustrative 140-FTE baseline, the target is at most 28 FTE.
This is an author-selected forecast, not an extrapolation fitted to benchmark scores.

The baseline window has already closed and no enterprise cohort has been enrolled. Testing
this exact dated hypothesis requires auditable historical activity records and a declared
population and aggregation rule before observing its endpoint. Otherwise it cannot be scored
as confirmed. A newly registered baseline can support a separate seven-year prospective test,
but cannot move the date of the original prediction. A credible test fixes the accounting
boundary, useful-hours conversion, output mix, quality tolerances, and service levels. It
reports the complete $L_A/L_H$ account rather than headcount in a renamed department. A ratio
above 0.20 at the deadline fails the proposed 80\% target; achieving that ratio while degrading
the fixed quality or service requirements also fails the substitution claim. Missing records
make the result unassessable, not successful.

Full elimination of human DGF execution is a stronger, undated conjecture. It would require
both case-level human effort and the recurring support floor to disappear. No finite benchmark
here establishes that endpoint. The scientifically actionable question is how much accepted
task execution can already be transferred and what prevents further transfer.

\subsection{Observing substitution without losing the work boundary}

A deployment study needs to observe both accepted output and the effort that produces it.
Begin with a defined population of review obligations, not a list of job titles. A project
may trigger several gates, return for revision, or draw on a shared specialist. Activity
records should preserve those relationships so that the same hour is not counted at every
gate and a returned dossier does not disappear from the denominator. The endpoint is the
delivery of the governed work, including obligations left open by conditional decisions.

The baseline can combine case records, activity logs, timesheets, and supplier records under
a common coding scheme. Evidence preparation, ordinary review, exception work, correction,
handoffs, and support need distinct codes. Shared infrastructure work is allocated once using
a declared rule, with an unallocated category when the attribution is unknown. Unobserved
supplier hours are missing observations; a low internal headcount cannot stand in for them.
Initial integration is reported separately from recurring operations, then included in any
investment or payback calculation. A deployment with low steady-state effort can still have
a substantial transition cost.

There are two different comparisons. Actual total hours answer how much work the organization
performed during the period. Standardized hours answer how much work comparable output would
require. If case volume falls, simply comparing payroll totals can mistake reduced demand for
automation. Conversely, rising project volume can conceal a reduction in effort per comparable
case. A study should report both quantities, standardize the variable work for route and case
mix, and retain a separately measured support term. Multiplying a fixed support burden by a
case-volume adjustment would erase the very floor the accounting model is intended to expose.

Quality and service are assessed alongside hours. Appropriate measures depend on the declared
contract, but include missed mandatory findings, invalid commitments, uncompleted obligations,
correction after acceptance, and time to a usable decision. The deployment must predeclare
which errors invalidate an output and which service changes are tolerable. Faster approvals
are not an efficiency improvement if blocking defects go undetected. Equally, a system that
escalates every dossier may preserve final decision quality while adding an extra processing
layer; the labor account exposes that outcome rather than rewarding abstention alone.

For an enrolled population, uncertainty in the final workload ratio should include between-case
variation, organizational clustering, and uncertainty in reconstructed baseline hours. An
upper bound at or below 0.20, with the quality and service conditions met, supports the target
for that population. A lower bound above 0.20 contradicts it; an interval crossing the threshold
is inconclusive. Missing material work cannot be silently assigned zero. Attrition, failed
deployments, and changes in the suppliers included in the account need explicit reporting.

Observing a reduction and attributing it to agents are separate steps. Staged or randomized
adoption, when feasible, supplies a comparison for changes that would have occurred anyway.
Otherwise, a study must document contemporaneous process redesign, outsourcing, demand shifts,
and other automation. An organization can meet the workload threshold while the causal share
attributable to language models remains unresolved. Finally, a convenience sample of successful
implementations does not establish an ecosystem-wide forecast. Such inference requires a
declared sampling frame, weights, and coverage of organizations where integration fails or
does not occur. No enterprise observations of this kind are claimed in the present paper.

\subsection{The next discriminating experiment}

The most useful extension varies information preparation while preserving answerability.
Compare agents receiving structured snapshots with agents receiving the source documents and
operational records from which the same facts can be recovered. Separately vary executable
versus natural-language policy. Fix source access, mandate, time and tool budgets, and the
evidence contract before testing on held-out cases. Include the rule engine and any required
extraction pipeline in the comparison. Report decision quality, supported findings, complete
routes, abstention, and all engineering and operating costs.

\begin{table}[!htb]
\centering\small
\caption{Proposed matched-information design. These conditions describe an unexecuted
extension, not four completed experiment arms.}
\label{tab:proposed-ablation}
\begin{tabularx}{\linewidth}{@{}P{1.6cm}LL@{}}
\toprule
Condition & Policy representation & Permitted evidence\\
\midrule
A & Executable definitions & Authoritative structured snapshots\\
B & Semantically equivalent prose & The same snapshots\\
C & The same executable definitions & Documents and operational records sufficient to recover the same facts\\
D & The same prose as B & The same documents and records as C\\
\bottomrule
\end{tabularx}
\end{table}

Table~\ref{tab:proposed-ablation} separates policy application from fact recovery. A versus B
changes policy representation while holding information fixed. A versus C changes information
presentation while retaining executable policy. The fourth condition checks the combined
burden. Access to code execution must be held constant; otherwise a representation comparison
also changes tool capability. Prose policies must preserve thresholds, priority rules, and
exceptions, and finding catalogs must not reintroduce the executable predicates in an arm
intended to omit them. These are controls on the experiment, not assumptions that the two
representations have already been made equivalent.

This extension does not require claiming human-level accuracy. It asks which parts of a review
the system can perform when the required facts are no longer supplied as a ready-to-use snapshot.
Our counterexample means that simply deleting snapshots is not a valid design. The released
source audit also identifies runtime tools that can return snapshots, so changing file visibility
alone would not isolate the intended condition. The existing extraction ablation and General-only
intervention are preparations, not additional measured results in this paper.

The evidence endpoint must also be common across the new conditions. The frozen benchmark
requires some support to name a structured snapshot. Removing that source while retaining the
same identifier requirement would penalize a source-reading agent even when it found the right
fact. A new contract can instead require an observed source identifier, a field or document
span, and a faithfully copied value. It must separately check whether those values support the
finding's premises. Applying that contract prospectively creates a new endpoint; it does not
retroactively change the original strict scores.

For the handoff question, replay the General gate under saved model history and a reference
history while preserving the same validated upstream conditional approvals. Replacing a
lawful conditional decision with an unconditional blocking reference would change authority
as well as information quality. The intervention should isolate the accuracy and completeness
of the handoff, retain identical dossier access and budgets, and use paired repeated
trajectories. Such a design could distinguish propagation of an earlier error from a fresh
General-gate error; the observed whole-route scores alone cannot identify that causal effect.

\subsection{Scope of the observations}

The policy, generator, and evaluator use the same declared rule system, so the measured endpoint
is conformity to that system. The 300 synthetic cases are sampled for decision coverage.
The original collection contains one trajectory per model and case across three endpoints;
the follow-up repeats 15 of those dossiers. The evidence sensitivity analysis applies a
specified structural matching criterion. These choices define the population, information
condition, and acceptance rule attached to the reported scores.

The observed comparison concerns agent and deterministic implementations of review contracts.
Human-review performance, enterprise operating effort, staffing changes, and the relative
adoption speed of different business functions are outside the measured variables. The
deployment protocol specifies how to observe the labor account and evaluate the dated
workforce hypothesis. Review performance and workforce requirements thus have explicit,
different measurement procedures.

%% file: sections/07_conclusion.tex
\section{Conclusion}

Agents can replace human execution of governance-review tasks when they produce the required
decision, findings, evidence, actions, and authorization record under the accepted contract.
DGF-Bench provides a concrete demonstration of this capability in a controlled environment:
the tested agents complete specialist reviews and entire project routes, and the deterministic
control executes the formalized policy kernel. Governance obligations can therefore remain in
force while their execution transfers from people to software.

The DGF is a credible candidate for extending that transfer. Its recurring dossiers, named
review responsibilities, explicit policies, and recorded handoffs give forward deployed
engineers identifiable work to implement. The proposed architecture connects agent
investigation to evidence services, policy execution, authority enforcement, and exception
handling. As these components cover more accepted tasks, the share requiring case-by-case
human execution can decrease across the workflow. The same reasoning applies to preparation,
verification, and support when their own contracts can be fulfilled in software.

The consequence for work is direct. At comparable output, quality, and service, a system that
satisfies the substitution frontier requires fewer human hours to perform the governed work.
Under the staffing conditions stated in this paper, sufficiently large reductions remove the
need for some posts. Organizations may absorb the released capacity through new demand or
redeployment; the execution of the transferred tasks has still been replaced. The relevant
quantity is the complete human-work account, including the people who operate, maintain, and
resolve exceptions around the system.

The long-term implication is that agents, integrated with executable policies and delegated
authority, can progressively replace human execution across DGF workflows. The measurements
here concern review performance; the magnitude and timing of workforce displacement are
specified as deployment hypotheses. The author's target of 80\% fewer required DGF FTE by
2033 is evaluated against the fixed baseline, output, quality, and accounting boundary
defined in this paper. Complete replacement would additionally require the remaining
exception and support work to be executed in software.

The contribution is a testable path from successful agent reviews to human task substitution:
define the contract, establish dependable execution, measure the work that remains, and expand
the scope where accepted output requires less human effort. Human review is an implementation
of governance that can change as these conditions are met.

\section*{Data, code, and attribution}

Jeremy Canale is the author: \texttt{contact@jeremycanale.com},
\url{https://www.jeremycanale.com}, \url{https://www.linkedin.com/in/jcanale13}.
The public repository is \url{https://github.com/jeremy1392/dgf-agentic-bench}.
Its \href{https://github.com/jeremy1392/dgf-agentic-bench/releases/tag/dgf-bench-300-20260923}{\texttt{dgf-bench-300-20260923}}
release contains the source snapshot, complete generated dossiers including Word documents and
architectures, model traces, failed attempts, scores, and cost ledgers. Results and reproduction
are in \nolinkurl{research/2026-09-dgf-bench/}; controls, evidence audits, repetitions, and source
checks are in \nolinkurl{research/2026-09-followup/}. Reproduce the original runs with the frozen
source. Workforce parameters are in \nolinkurl{paper/anc/parameters.json}; this manuscript
and its verifier are in \nolinkurl{paper2/}, and the extended manuscript in \nolinkurl{paper/}.
Earlier versions remain in repository history. The author is responsible for the claims and
interpretation.

%% file: sections/08_reproduction.tex
\appendix
\section{Reproducing the claims from the released artifacts}
\label{sec:reproduction}

The public release separates the experimental record from the current development checkout.
Use the frozen benchmark source for the original run; later code changes need not implement
the same scoring protocol. The dataset archive contains every generated dossier, including
Word documents, architecture diagrams, policies, mandates, and evaluator references. The run
archive contains model outputs and tool events as well as scores, failures, and cost records.
Per-file inventories and archive SHA-256 values identify the inputs before recomputation.

\begin{table}[!htb]
\centering\small
\caption{Repository records for the principal claims. Paths are relative to the public
repository; the release linked in the data statement contains the larger underlying archives.}
\label{tab:reproduction-map}
\begin{tabularx}{\linewidth}{@{}P{3.1cm}L@{}}
\toprule
Claim or operation & Inspectable record\\
\midrule
Original metrics and intervals & \nolinkurl{research/2026-09-dgf-bench/paper_results.json}\\
Matched model comparisons & \nolinkurl{research/2026-09-dgf-bench/paired_comparisons.json}\\
Falcon and Meridian traces & \nolinkurl{research/2026-09-dgf-bench/worked_examples.json}\\
Structural evidence audit & \nolinkurl{research/2026-09-followup/ALL_MODELS_AUDIT.md}\\
Repeated trajectories & \nolinkurl{research/2026-09-followup/repetition_analysis.json}\\
Information counterexample & \nolinkurl{research/2026-09-followup/document_ablation_preflight.json}\\
Workforce calculations & \nolinkurl{paper/anc/parameters.json}\\
Printed-table consistency & \nolinkurl{paper2/verify_manuscript.py}\\
\bottomrule
\end{tabularx}
\end{table}

The original offline reproduction script verifies downloads against the inventories, checks
the source and dataset identities, and recalculates aggregates and paired comparisons. The
expected denominators are 899 evaluable trajectories and 5,094 gates, with 299 dossiers in
the common comparison. Within the run archive, use \texttt{analysis\_20260923\_final}; earlier
partial exports are retained as historical records. No new API calls are needed to reproduce
these summaries. Recomputing saved-score aggregates verifies arithmetic and exclusions;
rescoring saved submissions with the frozen evaluator is a separate check of implementation.
Neither operation independently establishes the professional validity of the policy.

For the repeated runs, keep every planned trajectory and its retained gate prefix once.
Infrastructure errors remain in the archive and cost ledger but do not become extra completed
trials. Reproduce the pooled rates by summing successful gates over 255 gates per model, and
complete routes over 45. For the all-three endpoint, first group trajectories by dossier and
require success in every repetition, then divide by 15. Resampling must retain this grouping;
treating the 45 trajectories as 45 distinct projects would change the uncertainty calculation.

For the labor figure, the parameter file supplies population FTE, visit counts, exception
fractions, conditional effort assumptions, and operating scenarios. Evaluate the dimensional
account separately for each population, sum hours, and divide by the declared 120-hour monthly
FTE convention. This recovers 43.52, 85.32, 89.07, and 168.92 FTE. The computation validates
the consequences of the specified assumptions; it does not estimate those assumptions from
the benchmark. The manuscript verifier checks the printed numerical records and citations,
while source-archive compilation checks that the document is self-contained.

%% file: sections/references.tex
% Checked primary-source metadata, 24 September 2026. Cite only entries used.